\numberwithin{equation}{section}
\newcommand{\ee}{{\rm e}\hspace{1pt}}
\newcommand{\dd}{\hspace{1pt}{\rm d}\hspace{0.5pt}}
\newcommand{\veps}{\varepsilon}
\newtheorem{thm}{Theorem}
\newtheorem{defn}[thm]{Definition}
\DeclareMathOperator*{\argmax}{argmax}
\title{Differentially Private In-Context Learning with  Nearest Neighbor Search}
\author{%
  Antti~Koskela \\
  Nokia Bell Labs \\
  \And
  Tejas Kulkarni \\
  Nokia Bell Labs \\
  \AND
  Laith Zumot \\
  Nokia \\
}
\begin{document}

\maketitle

\begin{abstract}

Differentially private in-context learning (DP-ICL) has recently become an active research topic due to the inherent privacy risks of in-context learning. However, existing approaches overlook a critical component of modern large language model (LLM) pipelines: the similarity search used to retrieve relevant context data.
In this work, we introduce a DP framework for in-context learning that integrates nearest neighbor search of relevant examples in a privacy-aware manner. Our method outperforms existing baselines by a substantial margin across all evaluated benchmarks, achieving more favorable privacy-utility trade-offs. To achieve this, we employ nearest neighbor retrieval from a database of context data, combined with a privacy filter that tracks the cumulative privacy cost of selected samples to ensure adherence to a central differential privacy budget.
Experimental results on text classification and document question answering show a clear advantage of the proposed method over existing baselines.

\end{abstract}

\section{Introduction} 

In-context learning (ICL)~\citep{iclpaper} is a popular way to tailor a generic language model’s response to a specific context/domain. A typical ICL pipeline involves first preparing a guiding prompt that contains several task related examples, such as question-answer pairs, and then asking the language model to generate a response for the query, conditioned on the examples provided. A key feature of ICL is that it does not involve compute heavy operations of updating model weights and typically API or prompt-only access to LLM is sufficient.

Privacy risks in LLMs due to memorization are well known~\citep{memorization1,memorization2,memorization3}. One line of research deals with leakage in fine-tuning~\citep{dpftllm2,dpftllm1} or pretraining~\citep{dpptllm}.
Another line of research attempts to recover training records using clever prompt engineering~\citep{attack_pe1,attack_pe2,attack_pe3}.
Specifically for ICL,~\citet{mia_icl1,mia_icl2,mia_icl3} have proposed membership inference attacks to detect the membership of a test data point in a private prompt.

Differentially Private In-Context Learning (DP-ICL) is an active area of research, currently being explored along two parallel directions:

\begin{itemize}

    \item DP Synthetic example generators~\citep{tangprivacy,gao2025data,amin2024private}: These methods generate synthetic examples token-by-token by privately releasing the mean logits from several partitions sensitive examples. The next token with the largest weight is selected as the next token. The generated examples can be used as demonstrations in multiple downstream ICL tasks without incurring additional privacy costs. While one-time privacy costs is an attractive feature, these methods are computationally expensive and rely on logit outputs from the LLMs, which may not be easily available in many scenarios. Moreover, experiments in these works are limited to simpler tasks such text classification and information extraction.

    \item Pay-per-use~\citep{wuprivacy}: The private set of examples is partitioned into $k$ shards, each of which is associated with a given prompt. The model generates one response for each of the $k$ shards. For text classification, the final output is released via private voting using the shard responses. For text generation problems, the private responses are aggregated in either keyword or embedding space and released privately. LLM is then asked to provide the final response based on the top keywords or mean embeddings. While the privacy cost scales with the number of test queries answered, the method is easy to parallelize, does not require access to logits and has the capacity to generate very high-quality responses in a wide range of tasks. Our work improves on this method.

\end{itemize}

A related line of work, often referred to as private prediction, studies how to obtain differentially private predictions from non-private models~\citep{dwork2018privacy, papernot2018pate, bassily2018model, zhu2020private, zhu2023private}. Methods in this class typically perturb model outputs or the voting scores, and some of them also use $k$-nearest neighbor (kNN) search~\citep{zhu2020private, zhu2023private}.
Interestingly, both the DP Synthetic Example Generators the Pay-per-use methods can be viewed as instances of this broader private prediction framework: they provide privacy guarantees only at the prediction stage, while reusing non-private models. However, kNN methods have not yet been incorporated into the DP-ICL setting, which has unique characteristics due to the compositional and prompt-based nature of in-context learning.

The methods by~\citet{wuprivacy} use Poisson sampling to select the examples for the shards. While sampling amplifies the privacy protection, it can pick examples unrelated to the test query. It has been well-documented~\citep{lu2022fantastically,icl_bad_examples2,icl_bad_examples3} that the output of ICL is sensitive to the examples used, and randomly sampled examples can lead to increased prediction uncertainty, potentially resulting in worse performance compared to 0-shot predictions. Therefore, example selection has emerged as an important research direction in ICL~\citep{icl_survey}.

We emphasize that the embedding based kNN search of demonstrations is a standard component in information retrieval systems such as those designed for retrieval augmented generation (RAG)~\citep{icl_index} and can be easily plugged into an existing ICL pipeline. Surprisingly, we are unaware of any work on DP-ICL that uses kNN indexing despite their popularity.



\subsection{Our contributions}
\begin{itemize}

    \item  Through use of privacy filters~\citep{feldman2021individual}, we integrate nearest neighbor search into existing DP-ICL framework by~\citet{wuprivacy}. The modified solution composes prompts with $k$-nearest neighbors of each test point instead of randomly sampled examples like in the baseline methods by~\citet{wuprivacy} and~\citet{tangprivacy}.

    \item As a theoretical contribution, we provide a fully adaptive  $\delta$-approximate RDP analysis of so-called individual RDP filters.

    \item We carry out experiments on text classification and question answering on benchmark datasets with LLMs  such as Llama3.3-70B-it and Gemini-1.5-flash-8B.
    Our experiments clearly show that the overall privacy-utility trade-off is drastically improved with kNN.

\end{itemize}

\section{DP-ICL with kNN}

We give the required background on DP and the problem setting of ICL in Appendix Section~\ref{sec:background}. Our method is based on the basic primitives of report-noisy-max with Gaussian noise (RNM-Gaussian) and DP keyword space aggregation (DP-KSA) that are also the building blocks of the baseline methods by~\citet{wuprivacy}. Those methods are described in detail in Appendix Section~\ref{sec:icl}. We next describe how to combine those methods with kNN nearest neighbor search of examples.

\subsection{Retrieval of Most Similar Examples}

Instead of retrieving examples via subsampling, we combine the RNM-Gaussian and DP-KSA mechanisms with retrieval of the $k$ most similar examples from the sensitive dataset $X$. A similar approach is taken by~\citet{zhu2023private} for private prediction, though not in combination with in-context learning. Specifically, for each query $q$, we construct the retrieved set $\mathcal{R}(X)$ by selecting the $k$ elements $x_i \in X$ most similar to $q$ under a chosen similarity metric. The retrieved set is then partitioned into $M$ disjoint batches ${B_1,\ldots,B_M}$ for in-context prompting.

\subsection{Retrieval with Limited Sensitivity}

The challenge with kNN retrieval is how to carry out the privacy accounting. The required tool is given by the individual RDP accounting~\citep{feldman2021individual}.
To this end, we also require from the retrieval function $\mathcal{R}$ that its output can change at most by one element in case we change the dataset $X$ by one element.
More formally, the output of the LLM $\mathcal{A}$ consist of the retrieval and the DP-ICL algorithm. So we can think of it as a composition $\mathcal{A} = \mathcal{M} \circ \mathcal{R}$, where $\mathcal{M}$ is the DP-ICL mechanism (e.g., DP-KSA) that takes as an input the set of batches $\{B_1,\ldots,B_M\}$, and $\mathcal{R}(X)$ is the retrieval algorithm that fetches the batches from the input dataset $X$.

Mathematically, we say that $\mathcal{R}$ is \emph{stable under single-element change} if, whenever $X \simeq X'$, the outputs differ by at most two elements:
$
|\mathcal{R}(X) \setminus \mathcal{R}(X')| + |\mathcal{R}(X') \setminus \mathcal{R}(X)| \le 2.
$
In order to limit the sensitivity of the aggregation happening in the mechanism $\mathcal{A}$, we require this property from the retrieval $\mathcal{R}$.
In this work, we focus on the FLAT index for simplicity, as it performs a full exhaustive search and trivially satisfies the mentioned stability property. Extending our proposed method to approximate indexing like IVF or HNSW is a compelling avenue for future work. For example, DP $k$-means methods~\citep{chang2021locally} could be used to implement IVF search, incurring an additional privacy cost while still meeting the stability requirements of the retrieval.

\subsection{Individual RDP Accounting for DP-ICL with kNN}

The rigorous privacy accounting for DP-ICL with kNN retrieval can be carried out using an individual $(\alpha,\varepsilon)$-RDP privacy filter that keeps track of individual  privacy losses and drops from the analysis the data elements for which the cumulative privacy loss is about to cross the pre-determined budget $\varepsilon_{\max}$~\citep{feldman2021individual}.
To this end, we first give the following definitions.

Define $\mathrm{Sub}(S, x_i)$ as the set of datasets obtained from $S$ by substituting the data element $x_i$ by another data element, i.e.,
$$
\mathrm{Sub}(S, x_i) = \left\{ S' \;\middle|\; S' = (S \setminus \{x_i\}) \cup \{x_i'\},\; x_i' \in \mathcal{X} \right\}
$$

The individual $\delta$-approximate $(\alpha,\varepsilon)$-RDP privacy filter is described in the pseudocode of Algorithm~\ref{alg:individual_filter}. Notice that in each step $t$, the adaptively chosen mechanism $\mathcal{A}_t$ is of the form $\mathcal{A}_t = \mathcal{M} \circ \mathcal{R}_t$, where the retrieval function $\mathcal{R}_t$ depends adaptively on the query $q_t$ chosen at iteration $t$. I.e., the data elements that are used by the DP-ICL mechanism $\mathcal{M}$ at step $t$, depend on the query and the set of data elements that still have their privacy budget left.
When using the DP-KSA algorithm, we fix the iteration-wise failure probability $\delta_i$. However, it could also be chosen adaptively.

\begin{algorithm}
\caption{Adaptive composition $\mathcal{A}^{(T)}$ with Rényi filter}
\begin{algorithmic}[1]
\STATE \textbf{Input:} Dataset $X$, and privacy budget $(\varepsilon_{\max},\delta_{\max})$.
\STATE Set the active set of data elements to be the whole dataset: $S=X$.
\FOR{$t = 1, \ldots, T$}
    \FOR{all data entries $x_i \in S$}
        \STATE Compute individual $\delta_i$-approximate RDP parameters for the chosen mechanism $\mathcal{A}_t$. I.e.,
        \begin{equation*}
        \begin{aligned}
        \varepsilon_t^{(i)} =
        \sup_{S' \in \mathrm{Sub}(S, x_i)} D^{\delta_i}_{\alpha} \left( \mathcal{A}_t(a^{(t-1)}, S) \| \mathcal{A}_t(a^{(t-1)}, S') \right).
        \end{aligned}
        \end{equation*}
    \ENDFOR
    \STATE Update the set $S$ of active data elements:
    $$
    S = \left\{ x_i \, \bigg| \, \sum\nolimits_{j=1}^{t} \varepsilon_j^{(i)} \leq \varepsilon_{\max}, \,\, \sum\nolimits_{j=1}^{t} \delta_i \leq \delta_{\max}\right\}.
    $$
    \STATE Compute $a_t = \mathcal{A}_t(a_{1:t-1}, S)$
\ENDFOR
\STATE \textbf{Return} $(a_1, \ldots, a_T)$
\end{algorithmic}
\label{alg:individual_filter}
\end{algorithm}

The following result is our main theoretical result and is proven in Appendix~\ref{sec:delta_rdp}. It can be seen as a generalization of the RDP filtering result of~\citep[Thm.\;4.5][]{feldman2021individual} and of the
$\delta$-approximate zCDP filtering result of~\citep[Thm.\;1][]{whitehouse2022fully}.

\begin{thm}[Privacy Filter for $\delta$-approximate Rényi Differential Privacy] \label{thm:main}
Let $K \in \mathbb{Z}_+$ define the maximum number of compositions and let $\{\mathcal{M}_i\}_{i=1}^K$ be an adaptively chosen sequence of randomized mechanisms, where each $\mathcal{M}_i$ is $\delta_i$-approximate $(\alpha, \varepsilon_i(\alpha))$-RDP for some $\alpha \geq 1$. Let $\varepsilon_{\max}(\alpha) > 0$ and $\delta_{\max} \geq 0$ define the privacy budgets.
Then, a \emph{privacy filter} that halts when either
$
\sum_{i=1}^{T+1} \varepsilon_i > \varepsilon_{\max}(\alpha)$ or $\sum_{i=1}^{T+1} \delta_i > \delta_{\max}
$
ensures that, the composed mechanism $\mathcal{M}^{(K)} = \big(\mathcal{M}_1, \ldots, \mathcal{M}_K\big)$ is $\delta_{\max}$-approximate $\varepsilon_{\max}(\alpha)$-RDP.

\end{thm}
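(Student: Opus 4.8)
The plan is to lift the exponential-supermartingale technique behind the pure-RDP filter of \citep{feldman2021individual} to the $\delta$-approximate setting, absorbing the $\delta_i$ terms through a bad-event decomposition in the spirit of the $\delta$-approximate zCDP analysis of \citep{whitehouse2022fully}. Fix the order $\alpha$ and an arbitrary pair of neighboring inputs, and let $P,Q$ denote the joint output laws of the run under these two inputs, $\Lambda_t = P(a_{1:t})/Q(a_{1:t})$ the running likelihood ratio along the realized trajectory, and $\mathcal{F}_t$ the filtration generated by $a_{1:t}$. The key structural observation is that the halting time $\tau$ is a stopping time with respect to $\{\mathcal{F}_t\}$: in the filter each $\varepsilon_i(\alpha)$ and $\delta_i$ is computed from $a^{(t-1)}$ and the current active set, hence is $\mathcal{F}_{t-1}$-measurable (predictable), so the event that the next round would breach either budget is decided from the observed history. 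Because $\tau \le K$ is bounded, the optional stopping theorem will apply to any nonnegative supermartingale we build.

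I would first treat the pure case $\delta_{\max}=0$. Define the nonnegative process
\[
M_t = \Lambda_t^{\alpha}\,\exp\!\Big(-(\alpha-1)\sum\nolimits_{i\le t}\varepsilon_i(\alpha)\Big), \qquad M_0 = 1.
\]
The one-step RDP guarantee of $\mathcal{M}_t$, conditioned on the history, reads $\mathbb{E}_Q\big[(\Lambda_t/\Lambda_{t-1})^{\alpha}\,\big|\,\mathcal{F}_{t-1}\big]\le \exp\big((\alpha-1)\varepsilon_t(\alpha)\big)$, which gives $\mathbb{E}_Q[M_t\mid\mathcal{F}_{t-1}]\le M_{t-1}$; that is, $M_t$ is a supermartingale under $Q$. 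Optional stopping yields $\mathbb{E}_Q[M_\tau]\le 1$, and since the filter guarantees $\sum_{i\le\tau}\varepsilon_i(\alpha)\le\varepsilon_{\max}(\alpha)$ we have $M_\tau\ge \Lambda_\tau^{\alpha}\exp(-(\alpha-1)\varepsilon_{\max}(\alpha))$. Rearranging gives $\mathbb{E}_Q[\Lambda_\tau^{\alpha}]\le\exp((\alpha-1)\varepsilon_{\max}(\alpha))$, i.e. $D_\alpha(P\|Q)\le\varepsilon_{\max}(\alpha)$.

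To incorporate the $\delta_i$, I would invoke the $\delta_i$-approximate RDP property of each $\mathcal{M}_t$ to extract, conditionally on $\mathcal{F}_{t-1}$, a surrogate pair of laws $(\hat P_t,\hat Q_t)$ satisfying the pure $(\alpha,\varepsilon_t(\alpha))$-RDP moment bound, together with a coupling under which the true round-$t$ output agrees with its surrogate except on a bad event $B_t$ with $\Pr[B_t\mid\mathcal{F}_{t-1}]\le\delta_t$. On the global good event $E=\bigcap_{t\le\tau}B_t^{c}$ the surrogate trajectory drives exactly the supermartingale $M_t$ of the previous paragraph, so the conditioned laws are $\varepsilon_{\max}(\alpha)$-RDP; a union bound controls the discarded mass, $\Pr[E^{c}]\le\sum_{i\le\tau}\delta_i\le\delta_{\max}$. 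Repackaging ``pure $\varepsilon_{\max}(\alpha)$-RDP after excising probability $\delta_{\max}$'' into the stated $\delta_{\max}$-approximate $\varepsilon_{\max}(\alpha)$-RDP guarantee then completes the proof.

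The step I expect to be the main obstacle is this decomposition in the \emph{fully adaptive} regime, where both the choice of $\mathcal{M}_t$ and the filter's halting decision depend on past outputs. The surrogates $(\hat P_t,\hat Q_t)$ and their couplings must be constructed as $\mathcal{F}_{t-1}$-measurable functions of the history, and one must check that conditioning on the good events $B_t^{c}$ preserves both the supermartingale inequality for $M_t$ and the fact that $\tau$ remains a stopping time, so that optional stopping and the union bound remain valid simultaneously. Once this measurability and adaptivity bookkeeping is set up, the remaining algebra---the one-step moment inequality and the supermartingale update---is routine.
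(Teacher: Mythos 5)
Your proposal is correct and follows essentially the same route as the paper: the paper's proof is your exponential-supermartingale argument written out as an explicit backward peeling of the conditional moment bounds $\mathbb{E}\big[\mathrm{Loss}_n \mid \mathcal{F}_{n-1}\big] \le e^{(\alpha-1)\varepsilon_n}$ (each valid except on a per-round event of probability at most $\delta_n$), with the filter constraint $\sum_i \varepsilon_i \le \varepsilon_{\max}$ absorbed telescopically and the $\delta_i$ accumulated by a union bound into a global bad event of mass at most $\delta_{\max}$. The surrogate-distribution and measurability bookkeeping you flag as the main obstacle is handled only informally in the paper as well, so your outline matches the published argument in both substance and level of rigor.
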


Similarly, as from the general RDP filters follow results for individual filters~\citep{feldman2021individual}, from the general filtering result of Thm.~\ref{thm:main} it trivially follows that Algorithm~\ref{alg:individual_filter} is
is $\delta_{\max}$-approximate $(\alpha,\varepsilon_{\max})$-RDP. This privacy guarantee can be then converted to a $(\veps,\delta)$-DP guarantee using the conversion formula given in Appendix Eq.~\eqref{eq:conversion}.

\section{Experimental Results}

\subsection{Text Classification}

We first evaluate our kNN-based DP-ICL method on public benchmark text classification datasets AGNews~\citep{ag_news} and TREC~\citep{trec}. For simplicity, we set the privacy parameters such that each sample is used only once. Consistent with~\citet{wuprivacy},  our implementation utilizes 10 shards, each featuring 4 demonstrations. The exact prompt used has been provided in Appendix.

For the nearest neighbor search, we use the “all-MiniLM-L6-v2” model to produce embeddings of unit length with dimension 384. We use FLAT indices for retrievals, which are constructed using FAISS library~\citep{douze2024faiss}.

We carry out the classification experiments using the open-source model OPT-1.3B by Meta that is also available on the Huggingface platform~\citep{huggingface}.
The predictions are generated deterministically.
Figure~\ref{fig:fig_agnews} shows the mean test accuracies for an experiment, where we pick 200 randomly sampled test samples for the AGNews and TREC datasets, respectively, for different values of $\varepsilon$, when $\delta=10^{-5}$. The results for each  $\varepsilon$ are means of 5 independent runs, and the error bars depict 1.96 times the standard deviation, giving the asymptotic 95\% confidence interval.
We exclude the zero-shot results from Figure~\ref{fig:fig_agnews} when using OPT-1.3B since the zero-shot baseline achieves approximately only 58\% test accuracy for AGNews whereas the results for TREC are close to random guessing.
We remark that the test accuracies for the baseline method and for the zero-shot are similar as in~\citep{wuprivacy} that uses the GPT-3 Babbage model which also has approximately 1.3B parameters.
Figure~\ref{fig:fig_agnews} also includes the "nearest neighbor only (dummy)", where RNM-Gauss is run directly on a histogram formed using the counts of the nearest neighbors' labels.

Notice that on the TREC example of Fig.~\ref{fig:fig_agnews}, the performance of DP-ICL with kNN deteriorates as $\varepsilon$ decreases towards 0.5. This can be explained by the fact that as the DP are those of a Gaussian mechanism with sensitivity $\sqrt{2}$, for $\varepsilon=0.5$ the required noise scale is approximately 10 which equals the number of shards (number of votes), which already significantly randomizes the predictions. Naturally, this could be remedied by using more shards.

\begin{figure} [h!]
     \centering
        \includegraphics[width=.48\textwidth]{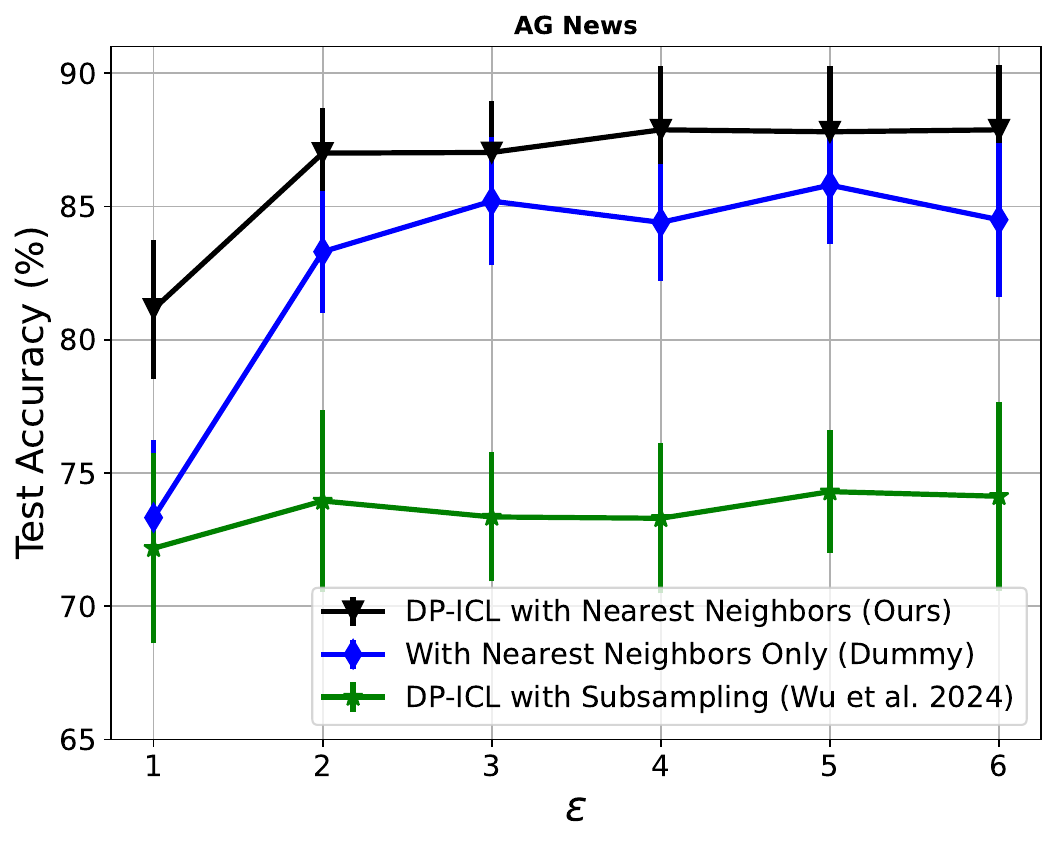}
        \includegraphics[width=.48\textwidth]{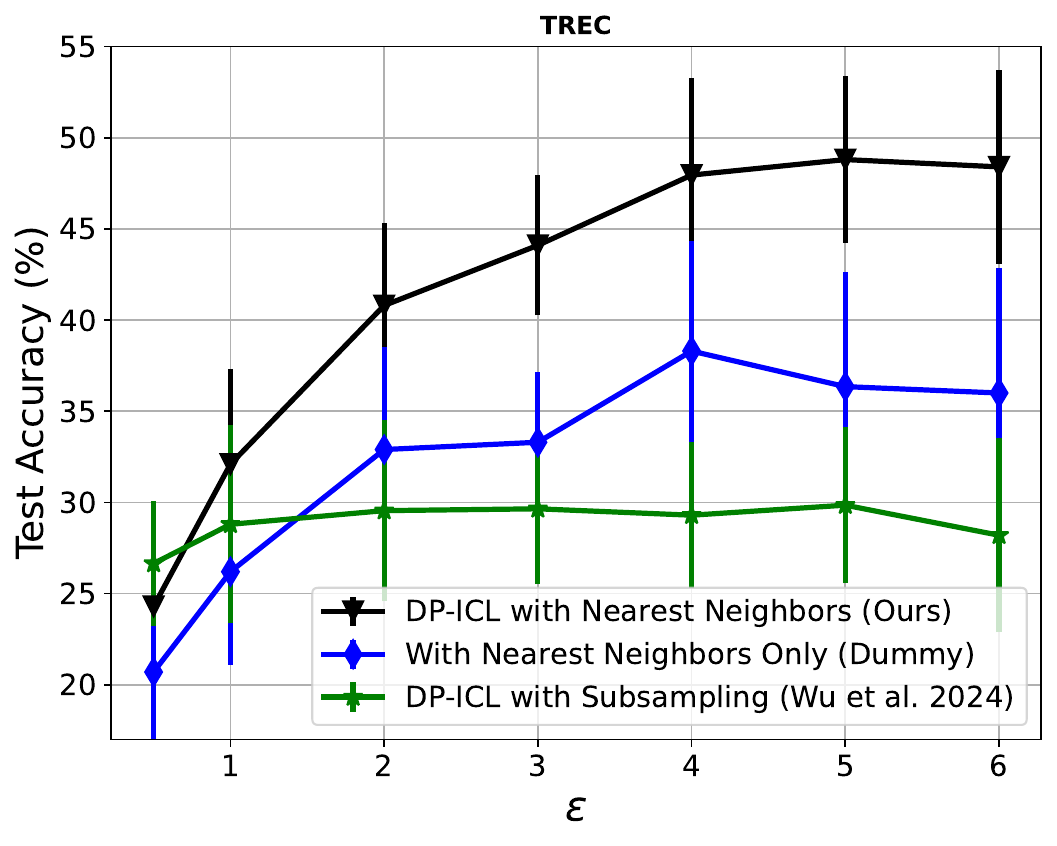}
        \caption{Mean test accuracies for 200 randomly sampled test samples. Left:  AGNews text classification task with 4 classes, averaged over 5  experiments.
        Right: TREC text classification task with 6 classes, averaged over 5 experiments.}
        \label{fig:fig_agnews}
\end{figure}

\subsection{Document Question Answering}
We next compare the methods on the  task of questions answering, on two datasets:

\textbf{Federated version of DocVQA~\citep{tobaben2024neurips}:} This dataset was curated for a competition organized at NeurIPS 23. Each dataset record contains a triplet of the form $\langle$image, question, answer$\rangle$. Each image is a sensitive invoice with confidential details (e.g. payer/payee names, invoice amount, purpose). The original task was to answer multiple questions on each image with limited information leakage. The scope of this work is limited to textual ICL. Therefore, we proxy each image with OCR tokens supplied in the same dataset. We decode and concatenate those tokens to form text sentences ignoring their original position in the image. As concatenated sentences may not form a cohesive paragraph this makes it a challenging dataset.

\textbf{SQuAD v1.1~\citep{rajpurkar2016squad}:} This is a standard reading comprehension dataset, consisting of questions posed on Wikipedia articles. The records are triplets of the form $\langle$paragraph, question, answer$\rangle$.

Both DP-KSA and DP-KSA-kNN  satisfy record-level DP which protects presence of a \emph{single} triplet (document, question, answer). However, both datasets contain multiple questions for each image/paragraph. Therefore, we randomly sample a single question-answer pair for each paragraph and assume that each record belongs to a single user.

 We continue to use  the all-MiniLM-L6-v2 model for embeddings. For DP-KSA-kNN method, we build FLAT index with the text paragraphs using FAISS library.

\begin{table}[h!]
\centering
\begin{tabular}{|l|c|c|}
\hline
\textbf{Dataset} & \textbf{Federated DocVQA} & \textbf{SQuAD} \\
\hline
Demonstration Set & 69,785 & 18,891 \\
Test Query Set & 100 & 100 \\
\hline
\end{tabular}
\caption{Comparison of dataset sizes between Federated DocVQA and SQuAD.}
\label{tab:dataset_sizes}
\vspace{-0.7em}
\end{table}
\textbf{Language models used:} The comparison of distribution of prompt lengths for both datasets is shown in Figure~\ref{fig:prompt_sizes} of Appendix.
We use Llama3.3-70B-it and Gemini-1.5-flash-8B and
 fix the temperature parameter to 0.7 in our API calls for both models.
However, we did not observe much variance in the responses due to ‘to-the-point nature’ of the questions.

\textbf{Accuracy metrics:} Our performance metrics include standard Rouge and Bleu scores. We have described the metrics for completeness in Table~\ref{tab:qa_metrics} in Appendix. All metrics range from 0 to 1. Higher scores imply a higher degree of similarity between two answers.

\textbf{Experimental Results:} Figure~\ref{fig:docvqa_gemini} shows plots for document QA task for 4-shot ICL with shard sizes 10 and 20 for several $\varepsilon$’s.
Plots for the other two (model,dataset) combinations are given in Appendix~\ref{sec:further}.
We use the same randomly sampled 100 test queries for all methods and $\varepsilon$’s. We also include 0-shot responses (obtained without any demonstrations) computed with the same number of shards. Outperforming this baseline is important for any method to justify the use of private demonstrations. Points with $\varepsilon=\infty$ correspond to non-private version of KSA and KSA-kNN.

The main high-level observation across both figures is that most metrics have higher values for the Llama model compared to Gemini. We also note that DP-KSA remains less sensitive to $\varepsilon$’s, whereas,
DP-KSA-kNN improves in many cases specially for high $\varepsilon$’s.
Figure~\ref{fig:docvqa_llama} in Appendix show additional results.

\begin{figure} [ht!]
     \centering
        \includegraphics[width=.490\textwidth]{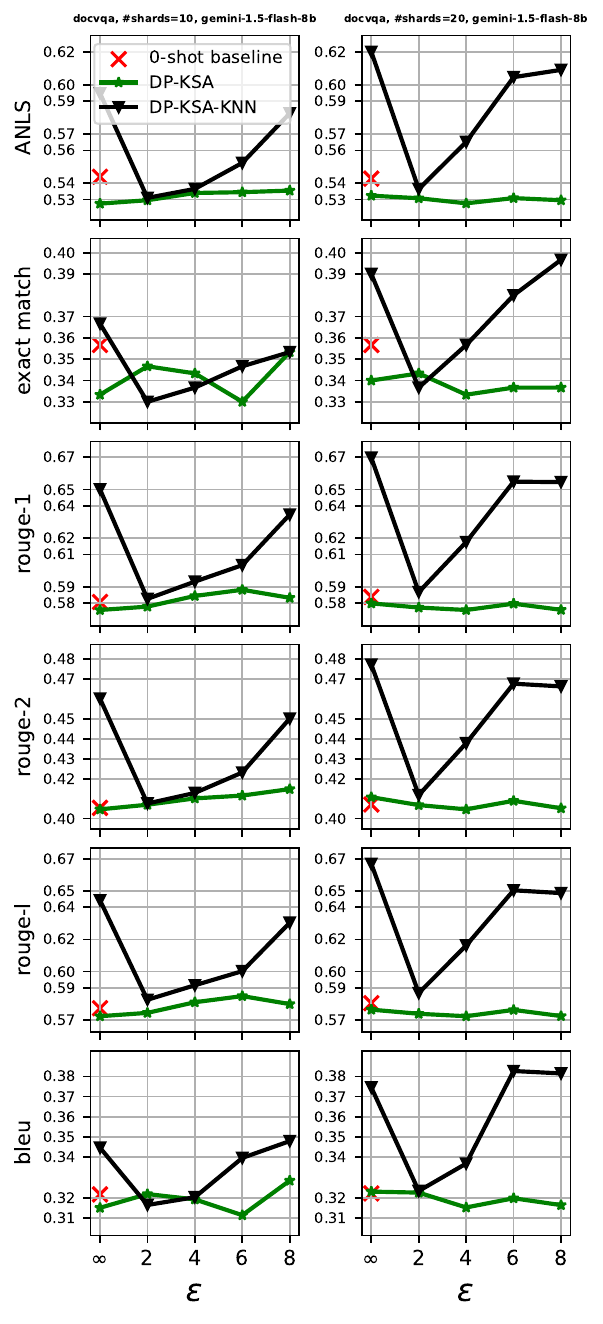}
                 \includegraphics[width=.49\textwidth]{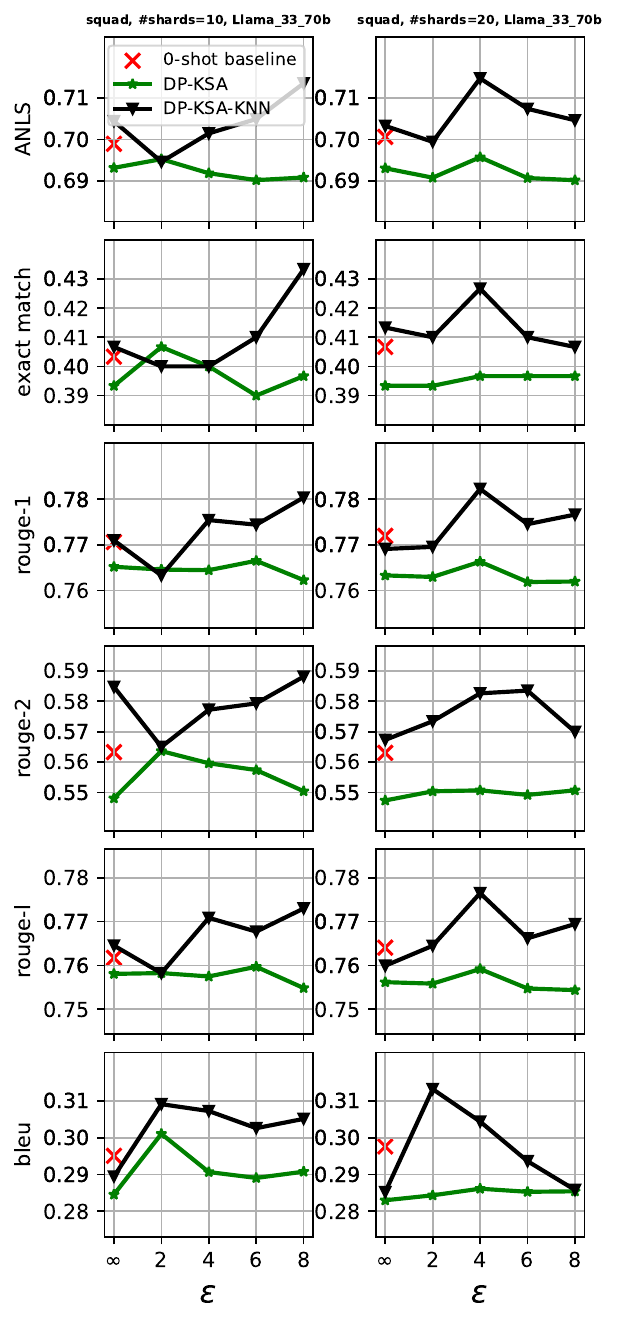}

        \caption{Left: A comparison of DP-KSA and DP-KSA-kNN on a  4-shot Q\&A task on the DocVQA  dataset using Gemini-1.5-flash-8B. Right: A comparison of DP-KSA and DP-KSA-kNN on a 4-shot Q\&A task  on the  SQuAD dataset using the Llama 3.3-70B-It model. The averages are computed over individual metrics for 100 test queries. The higher number indicates a higher degree of similarity between algorithm’s final response and ground truth. We see that the proposed method (DP-KSA-kNN) is superior compared to the baseline  (DP-KSA).
        }
        \label{fig:docvqa_gemini}
\end{figure}

\section{Conclusions}

In this work, we integrate nearest neighbor search based indexing into an existing DP-ICL framework. This is obtained by using the so-called fully adaptive privacy analysis and individual differential privacy filters. Our experiments on private text classification and private question answering tasks show the substantial advantage of our approach. Our method outperforms the zero-shot and  the DP baseline method by~\citet{tangprivacy} though our approach does not benefit from privacy amplification by subsampling.
Interesting research directions in this topic include building DP-ICL solutions utilizing alternative sample indexing and retrieval methods, such as those based on hierarchical clustering like $k$-means or hierarchical navigable small worlds (HNSW).

\bibliography{llm}

\newpage

\appendix

\section{Further Related Literature} \label{sec:literature}


Private in-context learning is typically applied to tasks such as text classification, question answering, and summarization. It often relies on techniques that privatize the counts of output tokens, using either additive noise mechanisms or top-$k$ mechanisms~\citep{tangprivacy,wuprivacy}.  There are data-adaptive refinements of the methdod by~\citet{tangprivacy}, given by~\citet{gao2025data}, and recently also by~\citet{amin2024private}
who use the accurate concentrated differential privacy accounting presented in~\citep{cesar2021bounding} for the exponential mechanism to improve the privacy-utility trade-offs.
Recent paper~\citep{koga2024privacy} considers a DP RAG method, however does not seem to incorporate the ranking of the augmenting samples into the DP mechanism.
Several papers consider a version of exponential mechanism tailored for private top-$k$ selection~\citep{gillenwater2022joint} called jointEM.
Recently, faster version of jointEM has been proposed by~\citet{haofaster}. Another work that is close to our work is that by~\citet{zhu2023private} who also consider individual privacy accounting and kNN similarity search for private prediction although in a way that is not directly applicable to existing DP-ICL methods.
\subsection{Existing Indexing Methods for Similarity Search}

Commonly used nearest neighbor search methods used in ICL and RAGs include

\begin{itemize}

    \item FLAT, Brute-force search where all vectors are stored and compared exhaustively. Suitable for small datasets but not scalable for large-scale search.

    \item IVF (Inverted File Index)
    Partitions the dataset into clusters (Voronoi cells) using the $k$-means algorithm. During a search, only a subset of clusters is probed to reduce computation. Efficient but requires careful tuning of the number of clusters

    \item HNSW (Hierarchical Navigable Small World) Graph-based indexing where points are connected in a proximity graph. Provides fast nearest neighbor search with logarithmic complexity.
\end{itemize}

These are the main methods of the widely used similarity search libraries Faiss (Facebook AI Similarity Search)~\citep{johnson2019billion,douze2024faiss} and Milvus~\citep{2021milvus,2022manu}.
As outlined by~\citet{douze2024faiss}, vector search systems must navigate trade-offs between search accuracy, speed, and memory consumption, which depend heavily on dataset size, vector dimensionality, and the chosen index architecture. Indexing methods like FLAT, IVF and HNSW can be deployed on both CPU and GPU hardware, providing flexibility to optimization in different application contexts. 



\section{Problem Setting and Background} \label{sec:background}

\subsection{In-Context Learning}
We have a private dataset $X =(x_1,x_2,\cdots,x_N)  \in \mathcal{X}^N$ of demonstrations, where $x_i \in [N]$ consists of the content (e.g. an article for classification or  a tuple of text description and a related question for QA task) and possibly some ground truth (e.g. label, answer, text summary).
We also have a prompt only access to a pretrained (autoregressive) language model $\mathrm{LM}$ with a large enough context window. We also have a  function $\mathcal{R}$ that retrieves a subset of $X$ as few-shot examples.
Given a query content $q$ (e.g. news article), we aim to generate the answer tokens $A$ (e.g. class label, answer to a question) $\argmax_A \mathrm{LM}(A | \mathcal{R}(X) + q)$ in a differentially private manner.
The sign ‘+’ denotes the concatenation operation.
Specifically, we want to use $X$ to learn the mapping between $x$’s and $y$’s and improve over 0-shot prediction $\argmax_A \mathrm{LM}(A |  q)$ for an unknown query $q$. We further assume that client and ICL server interact only once for a single query, and $\mathrm{LM}$ does not retain previous interactions with the same client.

\subsection{Differential Privacy}

 We say input sets $X$ and $X'$ are neighbours if we get one by substituting
one element in the other (denoted $X \sim X'$).


A mechanism $\mathcal{M}$
is $(\veps,\delta)$-DP if its outputs are
$(\veps,\delta)$-indistinguishable for neighbouring datasets.

\begin{defn} \label{def:dp}
	Let $\varepsilon \ge 0$ and $\delta \in [0,1]$. 
	Mechanism $\mathcal{M} \, : \, \mathcal{X}^n \rightarrow \mathcal{O}$ is  $(\veps, \delta)$-DP	if for every pair of neighbouring datasets $X,X'$,
	every measurable set $E \subset \mathcal{O}$,
	$$
	\mathbb{P}( \mathcal{M}(X) \in E ) \leq \ee^\varepsilon \mathbb{P} (\mathcal{M}(X') \in E ) + \delta.
	$$
\end{defn}

We will also use the R\'enyi differential privacy (RDP)~\citep{mironov2017} which is defined as follows.
R\'enyi divergence of order $\alpha \in (1,\infty)$ between two distributions $P$ and $Q$ is defined as
$$
D_\alpha(P || Q) = \frac{1}{\alpha - 1} \log \int \left( \frac{P(t)}{Q(t)} \right)^\alpha Q(t) \, \dd t.
$$
By continuity, we have that $\lim_{\alpha \rightarrow 1+} D_\alpha(P || Q) $ equals the KL divergence $KL(P || Q)$.

\begin{defn}

We say that a mechanism $\mathcal{M}$ is $(\alpha,\varepsilon)$-RDP, if for all neighbouring datasets $X,X'$, the output distributions
$\mathcal{M}(X)$ and $\mathcal{M}(X')$ have R\'enyi divergence of order $\alpha$ less than $\varepsilon$, i.e.,
$$
\max_{X \simeq X'} \{ D_\alpha\big( \mathcal{M}(X) || \mathcal{M}(X') \big) , D_\alpha\big( \mathcal{M}(X') || \mathcal{M}(X) \big) \} \leq \varepsilon.
$$
\end{defn}


Certain applications, like the Propose-Test-Release framework we consider, require a relaxation of RDP that allows a small probability of failure. To address this, we consider a $\delta$-approximate version of RDP, which extends the definition to account for a negligible additive failure probability $\delta$.

\begin{defn}
We say a randomized algorithm $\mathcal{M}$ is $\delta$-approximately
$(\alpha, \varepsilon(\alpha))$-RDP with order $\alpha \geq 1$,
if for all neighboring dataset $X, X'$, there exist events
$E$ (depending on $\mathcal{M}(X)$) and $E'$ (depending on $\mathcal{M}(X')$)
such that $\Pr[E] \geq 1 - \delta$ and $\Pr[E'] \geq 1 - \delta$, and we have
$$
    D_{\alpha}(\mathcal{M}(D)|E \, \| \, \mathcal{M}(D')|E')
    \leq \varepsilon.
$$

\end{defn}
We remark that in the application of text classification, we use the common RDP accounting, and for question answering, we need to use the $\delta$-approximate RDP. 

\subsection{$\delta$-Approximate RDP} \label{sec:delta_rdp}

We next review some of the properties of the $\delta$-approximate RDP~\citep[see, e.g.,][]{Bun2016,papernothyperparameter}.

First, recall that a randomized algorithm $\mathcal{M} : \mathcal{X}^n \to \mathcal{Y}$ is $\delta$-approximately $(\alpha, \varepsilon)$-R\'enyi differentially private if, for all neighbouring pairs of inputs $X, X' \in \mathcal{X}^n$, it is $(\alpha, \varepsilon)$-RDP except for a set of measure at most $\delta$. The definition is given more formally as follows.

\begin{defn}
We say a randomized algorithm $\mathcal{M}$ is $\delta$-approximately
$(\alpha, \varepsilon(\alpha))$-RDP with order $\alpha \geq 1$,
if for all neighboring dataset $X, X'$, there exist events
$E$ (depending on $\mathcal{M}(X)$) and $E'$ (depending on $\mathcal{M}(X')$)
such that $\Pr[E] \geq 1 - \delta$ and $\Pr[E'] \geq 1 - \delta$, and we have
$$
    D_{\alpha}(\mathcal{M}(D)|E \, \| \, \mathcal{M}(D')|E')
    \leq \varepsilon.
$$
\end{defn}

If $\mathcal{M}$ is $\delta$-approximate $(\alpha,\veps)$-RDP, we also shortly denote it as
$$
D_\alpha^\delta(\mathcal{M}(x) \| \mathcal{M}(x')) \leq \varepsilon.
$$

Some  basic properties of approximate RDP are as follows~\citep[see, e.g., Appendix E,][]{papernothyperparameter}:

\begin{itemize}
  \item $(\varepsilon, \delta)$-DP is equivalent to $\delta$-approximate $(\infty, \varepsilon)$-RDP.

  \item $(\varepsilon, \delta)$-DP implies $\delta$-approximate $(\alpha, \tfrac{1}{2} \varepsilon^2 \alpha)$-RDP for all $\alpha \in (1, \infty)$.

  \item $\delta$-approximate $(\alpha, \varepsilon)$-RDP implies $(\hat{\varepsilon}, \hat{\delta})$-DP for
  \begin{equation} \label{eq:conversion}
  \hat{\delta} = \delta + \frac{\exp((\alpha - 1)(\hat{\varepsilon} - \varepsilon))}{\alpha} \cdot \left(1 - \frac{1}{\alpha}\right)^{\alpha - 1}.
  \end{equation}

  \item $\delta$-approximate $(\alpha, \varepsilon)$-Rényi differential privacy is closed under postprocessing.

  \item If $\mathcal{M}_1$ is $\delta_1$-approximately $(\alpha, \varepsilon_1)$-Rényi differentially private and $\mathcal{M}_2$ is $\delta_2$-approximately $(\alpha, \varepsilon_2)$-Rényi differentially private, then their composition is $(\delta_1 + \delta_2)$-approximately $(\alpha, \varepsilon_1 + \varepsilon_2)$-RDP.
\end{itemize}

The following is a tailored subsampling amplification result for $\delta$-approximate RDP mechanisms, given by~\citet{wuprivacy}. We need it for evaluating the privacy guarantees of the baseline method.

\begin{thm}[Privacy amplification by Poisson subsampling for approximate RDP,~\citet{wuprivacy}]\label{thm:approx-rdp-sub}

Let $\mathcal{M}$ be a mechanism satisfying $\delta$‑approximate $(\alpha, \varepsilon_M(\alpha))$‑RDP. Let $\mathcal{M}_{\textup{sub}}$ denote the mechanism that applies $\mathcal M$ to a Poisson subsample of the data with sampling probability $\gamma$. Then:
$$
\mathcal{M}_{\textup{sub}} \text{ satisfies } \delta' \text{-approximate } (\alpha,\; \varepsilon_{\textup{sub}}(\alpha))\text{-RDP}
$$
where
$\delta' = \gamma \, \delta$ and
$\varepsilon_{\textup{sub}}(\alpha)$ equals the tightest possible amplification bound for an $\varepsilon_M(\alpha)$-RDP mechanism under Poisson sampling,
with amplification rate adjusted to
$
\frac{\gamma(1-\delta)}{1 - \gamma\delta}.
$
\end{thm}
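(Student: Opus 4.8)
The plan is to reduce the approximate-RDP statement to the \emph{pure}-RDP Poisson amplification bound by isolating, through conditioning events, the two separate sources of slack: the subsampling mixture and the $\delta$ failure mass of $\mathcal{M}$. First I would set up the mixture decomposition. Working with the add/remove neighbouring relation that is natural for Poisson subsampling (the remaining conventions follow as in the pure-RDP analysis), write $X' = X \cup \{x^*\}$ and let $P_0 = \mathcal{M}_{\textup{sub}}(X)$. Conditioning on whether $x^*$ survives the subsample, the output on $X'$ decomposes as the two-component mixture $\mathcal{M}_{\textup{sub}}(X') = (1-\gamma)P_0 + \gamma P_1$, where $P_1$ is $\mathcal{M}$ applied to the same subsample augmented by $x^*$. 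Crucially, $P_0$ and $P_1$ are outputs of $\mathcal{M}$ on a single neighbouring pair, so the $\delta$-approximate $(\alpha,\varepsilon_M(\alpha))$-RDP hypothesis applies directly to them.

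Second, I would introduce the conditioning events that absorb the $\delta$ of $\mathcal{M}$. By hypothesis there exist events $E_0, E_1$ with probability at least $1-\delta$ under $P_0, P_1$ respectively such that $D_\alpha(P_1|_{E_1} \,\|\, P_0|_{E_0}) \le \varepsilon_M(\alpha)$ (and symmetrically). The key observation is that the failure event of $\mathcal{M}$ is only \emph{exposed} on the branch where $x^*$ is actually subsampled, a branch that carries weight $\gamma$; restricting to $E_1$ on that branch alone yields a good event of probability $(1-\gamma) + \gamma(1-\delta) = 1 - \gamma\delta$, which gives $\delta' = \gamma\delta$. Renormalising the surviving mass then reshapes the mixing weight from $\gamma$ to the effective rate $\gamma' = \frac{\gamma(1-\delta)}{1-\gamma\delta}$, so that, conditioned on the good event, $\mathcal{M}_{\textup{sub}}(X')$ becomes exactly $(1-\gamma')P_0 + \gamma' P_1|_{E_1}$.

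Third, having reduced the problem to bounding the R\'enyi divergence between a mixture of the form $(1-\gamma')(\cdot) + \gamma'(\cdot)$ and a single component whose two ingredients are now (pure) $(\alpha,\varepsilon_M(\alpha))$-RDP indistinguishable, I would invoke the tightest known Poisson-subsampling amplification bound for pure RDP, but evaluated at the adjusted rate $\gamma'$ rather than $\gamma$; this is precisely the quantity $\varepsilon_{\textup{sub}}(\alpha)$ in the statement. The reverse divergence direction is obtained by the same decomposition applied to $D_\alpha(\mathcal{M}_{\textup{sub}}(X)\,\|\,\mathcal{M}_{\textup{sub}}(X'))$, yielding the symmetric conclusion.

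The hard part will be the bookkeeping in the second step: constructing the conditioning events so that the conditional output laws match \emph{exactly} the mixture form demanded by the pure-RDP amplification bound, and verifying that the good-event probabilities meet the threshold $1-\delta'$ on \emph{both} the $X$ and the $X'$ side. The delicate point is the asymmetry between the conditioned reference $P_0|_{E_0}$ appearing in the RDP hypothesis and the unconditioned $P_0$ that sits inside the subsampling mixture; reconciling these while retaining the $\gamma$ factor in $\delta' = \gamma\delta$ (rather than the naive $\delta$) is where the argument must be carried out most carefully.
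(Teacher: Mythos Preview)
The paper does not prove this theorem: it is stated in the appendix as a result ``given by~\citet{wuprivacy}'' and is invoked only as a black box for the privacy accounting of the baseline method, so there is no in-paper argument to compare your proposal against.

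That said, your outline is exactly the natural route to the claimed constants and matches how such a reduction is typically carried out. The factor $\gamma$ in $\delta'=\gamma\delta$ arises because the $\delta$-failure of $\mathcal{M}$ can only be exposed on the branch where the differing element is actually sampled, and the renormalisation $(1-\gamma)+\gamma(1-\delta)=1-\gamma\delta$ of the surviving mass is precisely what converts the nominal rate $\gamma$ into the effective rate $\gamma'=\gamma(1-\delta)/(1-\gamma\delta)$, after which the standard pure-RDP Poisson amplification bound applies verbatim. You have also correctly flagged the one genuinely delicate point: the approximate-RDP hypothesis supplies \emph{conditioned} distributions $P_0|_{E_0}$ and $P_1|_{E_1}$, whereas the mixture contains the unconditioned $P_0$ on the $(1-\gamma)$-branch and an unconditioned reference on the $X$-side, and one must choose the conditioning events on both $\mathcal{M}_{\textup{sub}}(X)$ and $\mathcal{M}_{\textup{sub}}(X')$ so that each retains probability at least $1-\gamma\delta$ while the resulting conditioned laws fit the pure-RDP mixture template. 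Handling this consistently (and, for full rigor, integrating over the random subsample $S$ rather than treating $P_0$ and $P_1$ as outputs on a single fixed neighbouring pair) is the main bookkeeping, exactly as you anticipate.
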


\section{Baseline DP-ICL Methods} \label{sec:icl}


We next describe the baseline private aggregation methods by~\citet{wuprivacy} upon which our approach builds. They adopt the \emph{Gaussian Report Noisy Max} (RNM), introduced by~\citet{rnm_gauss}, as one of the mechanisms  for privately selecting class labels in classification tasks. For document question answering, where outputs are open-ended and higher dimensional, they operate in a lower-dimensional keyword space, using private mechanisms to identify salient content at the token level. The following two sections describe both methods in detail.

\subsection{RNM‑Gaussian Mechanism for Text Classification}

The RNM‑Gaussian mechanism $M_\sigma$ adds independently sampled Gaussian noise to each bin of the voting histogram $h \in \mathbb{R}^k$ over class labels, where the histogram has global sensitivity $\Delta = \sqrt{2}$ (since a change in one example–query pair affects at most two bins). Specifically:

$$
\widetilde{h}_i = h_i + \mathcal{N}(0, \sigma^2), \quad \text{for all } i = 1, \dots, k,
$$
and the privatized response is obtained via:
$$
\mathcal{M}(h) = \arg\max_i \, \widetilde{h}_i.
$$
When making $T$ private predictions this way, setting
$\sigma = \sqrt{2 T \ln(1.25/\delta)}/\varepsilon$
ensures that the sequence of outputs satisfies $(\varepsilon, \delta)$-differential privacy~\citep{DworkRoth}. More accurate privacy bounds can be obtained via RDP or by using so called privacy profiles~\citep{balle2018gauss}.
In this work, we use RDP for privacy accounting.



\subsection{Keyword Space Aggregation (KSA) for Document Question Answering}

In the document question answering task, the output $A$ of the LLM consists of natural language tokens (e.g., answers or summaries), rather than a fixed class label. To enable private aggregation in this higher-dimensional output space, we adopt the Keyword Space Aggregation (KSA) method. This approach reduces the complexity of the aggregation by projecting responses into a lower-dimensional token space and performing differentially private selection over salient tokens.

Given a query content $q$, a retrieval function $\mathcal{R}$ obtains $M$ disjoint subsets of the private dataset $X$ and construct $M$ in-context prompts. I.e., the retrieved set of batches $\mathcal{R}(X) = \{B_i\}_{i=1}^M$, where each disjoint batch $B_i$ contains a number of data points. For each prompt, the output is sampled from the language model:
$$
O_i(q) := \mathrm{LM}(q \,+\, B_i),
$$
where $O_i(q)$ is the natural language answer generated by the model for the $i$-th prompt. These outputs are then tokenized to form a frequency histogram $h \in \mathbb{R}^D$ over the vocabulary $\mathcal{V}$ of size $D$, where each count $h_t$ corresponds to the number of outputs in which token $t$ appears:
$$
h_t = \left| \left\{\, i : \text{token } t \in O_i(q) \,\right\} \right|, \quad t = 1, \dots, D.
$$

To privately identify the most relevant semantic content, a differentially private mechanism is applied to select the top-$k$ tokens from the histogram $h$. Depending on the vocabulary size $D$, they consider the following approach.



\paragraph{Propose-Test-Release (PTR).} When $D$ is large or unbounded, a PTR mechanism first privately tests whether the frequency gap $h_{(K)} - h_{(K+1)}$ exceeds a threshold. If the test passes, the top-$K$ tokens are released exactly. To determine $K$ privately, one can also perform a noisy argmax:
$$
\arg\max_k \left( h_{(k)} - h_{(k+1)} \right),
$$
where $h_{(k)}$ denotes the $k$-th largest entry in $h$.

The selected keywords $\{t_1, \dots, t_K\}$ are then incorporated into a follow-up prompt that guides the language model to generate a coherent final answer. For example, we use a structured template such as:
\begin{quote}
\texttt{Using the following keywords, answer the question concisely: $t_1$, $t_2$, ..., $t_K$.}
\end{quote}

An alternatively, exponential mechanism (EM) can also be used to release top keywords, however we found the privacy-utility trade-offs of the PTR-based method superior in our experiments compared to EM-based method.

This procedure ensures that the final output reflects the aggregated knowledge across demonstrations, while differential privacy is guaranteed through token-level mechanisms. The KSA method thus enables private, scalable, and semantically meaningful aggregation in the document QA setting.

\subsection{Privacy Amplification via Subsampling}

\citet{wuprivacy} use Poisson subsampling as the retrieval method $\mathcal{R}$ to select demonstration, and to amplify privacy guarantees. For each query $q$, for the retrieved set of batches $\mathcal{R}(X)$, each example $x_i \in X$ is included independently with probability $\gamma$, and partition the sampled set into $M$ disjoint batches for in-context prompting.
This reduces the likelihood of any individual contributing to the final output, leading to improved privacy bounds. In particular, if the aggregation mechanism is $(\varepsilon, \delta)$-DP, then the overall mechanism with subsampling satisfies approximately $(\gamma\varepsilon, \gamma\delta)$-DP, under standard amplification results. The accurate privacy accounting can be carried out either using subsampling results for RDP (Appendix Thm.~\ref{thm:approx-rdp-sub}).

\textbf{RNM-Gaussian with subsampling:} After subsampling and constructing the class histogram $h$, Gaussian noise is added as before:
$
\widetilde{h}_i = h_i + \mathcal{N}(0, \sigma^2),
$
with the final output $\arg\max_i \widetilde{h}_i$ satisfying improved privacy due to subsampling.

\textbf{KSA with subsampling:} In the QA setting, we apply the same subsampling step before generating outputs $O_i(q)$ and aggregating tokens into the histogram $h$. The top-$K$ selection (via PTR) is then performed on the reduced set, benefiting from the same privacy amplification.


Our main contribution is to replace the existing  $\mathcal{R}$ of random subsampling with a kNN-based retrieval of the most relevant examples from the database. While this approach sacrifices the privacy amplification benefits of subsampling, it significantly improves the quality of the generated outputs. As a result, we can tolerate higher noise levels in the aggregation step, ultimately yielding a better overall privacy–utility trade-off.

 The combination of kNNs and individual RDP has also been used in~\citep{zhu2023private} for private classification with kNN search. However, we consider a completely different and a much broader task of ICL. The method~\citep{zhu2023private} cannot be applied to generative tasks such as question answering. Despite kNN’s popularity in non-private ICL/RAG pipelines, no prior work on DP-ICL has considered employing it.


\section{Propose-Test-Release}

In this Section, we give background details on the propose-test-release (PTR) which is also part of the baseline method~\citet{wuprivacy} and which forms also the basis of our DP-KSA-kNN method.

The main idea of DP-KSA implemented with PTR paradigm is that, for the task of releasing the top-$k$ indices of a voting histogram,
if $H(k) - H(k+1) > 2$, then the top-$k$ indices are exactly the same for all neighboring datasets.
Thus, we can release them without additional noise in that case. To ensure this, a DP test of the gap $H(k) - H(k+1)$ has to be carried out.
This whole PTR procedure is depicted in Algorithm~\ref{alg:topkwithptr}.

\begin{algorithm}[H]
\caption{TopKwithPTR}
\label{alg:topkwithptr}
\begin{algorithmic}[1]
\REQUIRE $k$ -- number of top tokens to release; $H$ -- histogram of token counts; $\delta$ -- failure probability
\STATE $d_k \gets H(k) - H(k+1)$
\STATE $\hat{d}_k \gets \max(2, d_k) + \mathcal{N}(0, 4\sigma^2) - \Phi^{-1}(1-\delta; 0, 2\sigma)$
\IF{$\hat{d}_k > 2$}
    \STATE \textbf{return} exact top-$k$ tokens
\ELSE
    \STATE \textbf{return} Terminate (or fallback to zero-shot learning)
\ENDIF
\end{algorithmic}
\label{alg:TopKwithPTR}
\end{algorithm}

The utility can be further optimized, by selecting $k$ that maximizes the gap $H(k) - H(k+1)$ in a privacy-preserving way
using the exponential mechanism. This is depicted in Algorithm~\ref{alg:findbestk}.

\begin{algorithm}[H]
\caption{FindBestK}
\label{alg:findbestk}
\begin{algorithmic}[1]
\REQUIRE $H$ -- histogram of token counts
\FOR{$k = 1$ to $N-1$}
    \STATE $d_k \gets H(k) - H(k+1)$
\ENDFOR
\STATE \textbf{return} $\arg\max_k \left( d_k + r(k) + \text{Gumbel}(4/\varepsilon) \right)$
\end{algorithmic}
\end{algorithm}
For Algorithm~\ref{alg:TopKwithPTR}, we have the following privacy guarantee given in~\citep[Thm.\;11,][]{wuprivacy}.
\begin{thm}[TopKwithPTR Privacy Guarantee]
Let $H$ be the histogram of a set of i.i.d. samples from a bounded-support distribution.
Let $H(k)$ denote the $k$-th largest value in $H$, and suppose we want to release the top-$k$ indices
of $H$ only if $H(k) - H(k+1) > 2$. Define
$$
\hat{d}_k := \max(2, d_k) + \mathcal{N}(0, 4\sigma^2) - \Phi^{-1}(1 - \delta; 0, 2\sigma),
$$
where $\Phi^{-1}(\cdot; 0, 2\sigma)$ is the inverse CDF of a Gaussian distribution with mean $0$
and standard deviation $2\sigma$. Then the mechanism that releases the top-$k$ indices
if $\hat{d}_k > 2$ satisfies $\delta$-approximate $\big(\alpha,\frac{\alpha}{2\sigma^2})$-RDP for all $\alpha \geq 1$.
\end{thm}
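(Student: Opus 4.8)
The plan is to follow the propose-test-release template: show that the noisy test statistic $\hat d_k$ is merely a Gaussian mechanism applied to a sensitivity-$2$ quantity, that on a high-probability event the exact top-$k$ set released on the ``pass'' branch is identical for all neighbours (and hence carries no additional information beyond the release decision itself), and that the downward shift $\Phi^{-1}(1-\delta;0,2\sigma)$ caps the probability of releasing an \emph{unstable} set at exactly $\delta$. These three ingredients will combine to give $\delta$-approximate $(\alpha,\tfrac{\alpha}{2\sigma^2})$-RDP for every $\alpha \ge 1$.

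First I would pin down the sensitivity and stability facts about the gap $d_k = H(k) - H(k+1)$. Since $H$ is a histogram and a single substitution $X \sim X'$ increments one bin and decrements another, every order statistic moves by at most $1$; consequently $|d_k(X) - d_k(X')| \le 2$, and the clipped quantity $\max(2, d_k)$, being a $1$-Lipschitz post-processing, has global sensitivity at most $2$ as well. The same bound yields the stability claim: if $d_k(X) > 2$, then under any neighbour the $k$-th and $(k{+}1)$-th counts cannot swap ranks — their gap shrinks by at most $2$ — so the top-$k$ index set is neighbour-invariant, $T_k(X) = T_k(X')$. This is exactly the regime in which exact (noiseless) release is safe.

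Next I would control the failure event. On $\{d_k \le 2\}$ the clipped statistic equals $2$, so $\hat d_k = 2 + Z - c$ with $Z \sim \mathcal{N}(0, 4\sigma^2)$ and $c = \Phi^{-1}(1-\delta; 0, 2\sigma)$ the $(1-\delta)$-quantile of $\mathcal{N}(0, 4\sigma^2)$; the pass condition $\hat d_k > 2$ then reduces precisely to $Z > c$, which occurs with probability $\delta$. Hence the event $E$ that the mechanism either terminates or releases a neighbour-invariant set, together with the analogous event $E'$ for $X'$, each has probability at least $1-\delta$, matching the requirement of the $\delta$-approximate RDP definition.

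Finally I would assemble the bound. Conditioned on $E$ and $E'$, the output is a deterministic post-processing of $\hat d_k$ alone: threshold at $2$, emit the terminate symbol on failure and the common fixed label $T$ on success. Since $\hat d_k$ is a Gaussian mechanism with noise variance $4\sigma^2$ applied to a statistic of sensitivity $2$, it is $\big(\alpha, \tfrac{\alpha \cdot 2^2}{2 \cdot 4\sigma^2}\big) = \big(\alpha, \tfrac{\alpha}{2\sigma^2}\big)$-RDP, and closure under post-processing carries this to the conditional output distributions; invoking the $\delta$-approximate RDP definition with $E, E'$ then gives the claim. The main obstacle lies in the bookkeeping of this last step: I must define $E$ and $E'$ so that whenever either neighbour takes the pass branch the released set is the same $T$, and verify that the conditional laws genuinely collapse to identical post-processings of a single Gaussian. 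The asymmetry that $d_k(X) > 2$ does not force $d_k(X') > 2$ — the neighbour's statistic is shifted by up to $2$ — is what must be handled with care, so that it is the \emph{conditional} Rényi divergence, and not merely the marginal failure probabilities, that is controlled at $\tfrac{\alpha}{2\sigma^2}$.
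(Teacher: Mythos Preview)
The paper does not supply its own proof of this statement; it is quoted as Theorem~11 of \citet{wuprivacy}, so there is nothing in the paper to compare against. Your PTR sketch is the standard one and is essentially correct.

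The one place that needs tightening is the definition of the events $E,E'$. If ``releases a neighbour-invariant set'' is read as invariant under \emph{all} neighbours of the respective dataset, then in the mixed case $d_k(X)>2$, $d_k(X')\le 2$ you would obtain $E$ equal to the full space but $E'=\{\text{terminate}\}$, and then $D_\alpha\big(\mathcal{M}(X)\mid E \,\big\|\, \mathcal{M}(X')\mid E'\big)=\infty$, since the second conditional law is a point mass at $\perp$ while the first puts positive mass on $T$. The reading that works --- and which your final paragraph is clearly reaching for --- is pairwise: whenever $d_k(X)>2$ or $d_k(X')>2$, stability already forces $\mathrm{top}\text{-}k(X)=\mathrm{top}\text{-}k(X')=:T$, so take $E=E'$ to be the full event and observe that both outputs are then the \emph{same} post-processing $f(\hat d_k)$ of a sensitivity-$2$ Gaussian with variance $4\sigma^2$; only when both gaps are at most $2$ do you set $E=E'=\{\text{terminate}\}$, each of probability exactly $1-\delta$ (since $\max(2,d_k)=2$ on both sides), with conditional divergence zero. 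With this case split made explicit the argument is complete.
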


The following privacy guarantee is a standard result for the exponential mechanism, based on the "Gumbel max trick" which means that the implementation of the exponential mechanism is equivalent to running report noisy max with properly scaled additive Gumbel distributed noise~\citep[Remark 3.1,][]{DworkRoth}. In practice, when using FindBestK for DP-KSA, we set $k_{\max} = 30$ and $k_{\min}=15$.

\begin{thm}[FindBestK Privacy Guarantee]
Let $d_k := H(k) - H(k+1)$ for $k = 1, \dots, N-1$, and define $r(k)$ to be a regularizer such that
$r(k) = -\infty$ for $k > k_{\max}$ or $k < k_{\min}$, and $r(k) = 0$ otherwise. Then the mechanism
$$
\arg\max_k \left( d_k + r(k) + \text{Gumbel}(4/\varepsilon) \right)
$$
satisfies $\varepsilon$-differential privacy.
\end{thm}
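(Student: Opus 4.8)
The plan is to recognize this mechanism as an instance of the exponential mechanism in disguise, invoke the ``Gumbel-max'' equivalence cited immediately before the statement, and thereby reduce the entire problem to a single sensitivity computation for the score vector $(d_k)_k$. First I would recall the Gumbel-max trick in the form already referenced in the text: if each score $s_k$ is perturbed by independent $\mathrm{Gumbel}(b)$ noise and we return $\arg\max_k\big(s_k + \mathrm{Gumbel}(b)\big)$, then index $k$ is returned with probability proportional to $\exp(s_k/b)$. Applying this with $s_k = d_k + r(k)$ and $b = 4/\varepsilon$ shows that FindBestK samples $k$ with probability proportional to $\exp\big(\tfrac{\varepsilon}{4}(d_k + r(k))\big)$, i.e. it is exactly the exponential mechanism with utility scores $u_k = d_k + r(k)$ and exponent $\tfrac{\varepsilon}{4}$. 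Since the exponential mechanism with exponent $\tfrac{\varepsilon'}{2\Delta}$ and score sensitivity $\Delta$ is $\varepsilon'$-DP, it then suffices to show that the score sensitivity is at most $\Delta = 2$, so that $\tfrac{\varepsilon}{4} = \tfrac{\varepsilon}{2\Delta}$ and $\varepsilon' = \varepsilon$.

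Second, I would bound the sensitivity of $d_k = H(k) - H(k+1)$ under a single substitution $X \simeq X'$. The regularizer $r(k)$ is fixed independently of the data and so contributes nothing to the sensitivity; it is enough to analyze $d_k$. The argument has two steps. Step (i): a single substitution changes the token histogram coordinatewise by at most one, since it alters at most one of the disjoint shard outputs $O_i(q)$, and each output either contains a given token or not, so each count $h_t = |\{i : t \in O_i(q)\}|$ moves by at most one; this is the same per-bin stability property underlying the TopKwithPTR analysis, and it gives $\|H - H'\|_\infty \le 1$. Step (ii): the sorted vector, and hence every order statistic $H(k)$, is $1$-Lipschitz with respect to the $\ell_\infty$ norm, so $|H(k) - H'(k)| \le 1$ for every $k$. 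Combining these via the triangle inequality yields $|d_k - d_k'| \le |H(k) - H'(k)| + |H(k+1) - H'(k+1)| \le 2$, uniformly in $k$, which is the required bound.

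Finally, I would assemble the pieces: with score sensitivity $\Delta = 2$ and Gumbel scale $b = 4/\varepsilon = 2\Delta/\varepsilon$, the exponential-mechanism guarantee yields $\varepsilon$-DP. The only remaining subtlety is the $r(k) = -\infty$ truncation: because $r$ is data-independent, it merely restricts the output range to the fixed set $\{k : k_{\min} \le k \le k_{\max}\}$, and the exponential mechanism over a data-independent output set retains its privacy guarantee (the excluded indices receive probability zero under every dataset, so the ratio bound is unaffected). I expect the main obstacle to be step (ii) of the sensitivity argument, namely justifying cleanly that order statistics are $1$-Lipschitz in $\ell_\infty$ and that the per-bin stability of the histogram therefore transfers to stability of the gaps $d_k$; the Gumbel-to-exponential reduction and the final bookkeeping are routine once the sensitivity $\Delta = 2$ is established.
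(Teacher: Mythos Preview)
Your proposal is correct and follows precisely the route the paper indicates: the paper does not spell out a proof but simply states that this is ``a standard result for the exponential mechanism, based on the `Gumbel max trick'\,'' and cites \citep[Remark~3.1]{DworkRoth}. Your reduction to the exponential mechanism via Gumbel-max, together with the sensitivity computation $\Delta=2$ for the gap scores $d_k$ (which is indeed the same per-bin $\ell_\infty$ stability used in the TopKwithPTR analysis), is exactly the argument the paper is invoking without writing out.
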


\section{Fully Adaptive $\delta$-Approximate RDP Accounting (Proof of Thm~\ref{thm:main})}

\begin{thm}[Privacy Filter for $\delta$-Approximate Rényi Differential Privacy]
Let $K \in \mathbb{Z}_+$ define the maximum number of compositions and let $\{\mathcal{M}_i\}_{i=1}^K$ be an adaptively chosen sequence of randomized mechanisms, where each $\mathcal{M}_i$ is $\delta_i$-approximate $(\alpha, \varepsilon_i(\alpha))$-RDP for some $\alpha \geq 1$. Let $\varepsilon_{\max}(\alpha) > 0$ and $\delta_{\max} \geq 0$ define the privacy budgets.
Then, a \emph{privacy filter} that halts when either
$$
\sum_{i=1}^{T+1} \varepsilon_i > \varepsilon_{\max}(\alpha) \quad \text{or} \quad \sum_{i=1}^{T+1} \delta_i > \delta_{\max}
$$
ensures that, the composed mechanism $\mathcal{M}^{(K)} = \big(\mathcal{M}_1, \ldots, \mathcal{M}_K\big)$ is $\delta_{\max}$-approximate $\varepsilon_{\max}(\alpha)$-RDP.


\end{thm}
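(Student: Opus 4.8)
The plan is to combine a likelihood-ratio supermartingale argument in the spirit of Feldman and Zrnic with a union bound over per-step failure events, thereby decoupling the Rényi budget $\varepsilon_{\max}(\alpha)$ from the failure budget $\delta_{\max}$. Fix a neighboring pair $X \simeq X'$ and write $p$ and $q$ for the densities of the transcript $(a_1,\ldots,a_T)$ under $X$ and $X'$, where $T \le K$ is the random halting time of the filter. The target is to produce events $\mathcal{E}$ (on $\mathcal{M}^{(K)}(X)$) and $\mathcal{E}'$ (on $\mathcal{M}^{(K)}(X')$), each of probability at least $1-\delta_{\max}$, such that the conditional order-$\alpha$ Rényi divergence $D_\alpha\big(\mathcal{M}^{(K)}(X)\mid\mathcal{E}\,\big\|\,\mathcal{M}^{(K)}(X')\mid\mathcal{E}'\big) \le \varepsilon_{\max}(\alpha)$; by definition this is exactly the asserted $\delta_{\max}$-approximate $(\alpha,\varepsilon_{\max}(\alpha))$-RDP.

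For the $\delta$ side, first I would invoke the event-based definition of approximate RDP at each step. Because each adaptively chosen $\mathcal{M}_i$ is $\delta_i$-approximate $(\alpha,\varepsilon_i(\alpha))$-RDP conditionally on the history $a_{1:i-1}$, there are events $E_i$ and $E_i'$ with $\Pr[E_i],\Pr[E_i'] \ge 1-\delta_i$ on which the conditional divergence between $\mathcal{M}_i(a_{1:i-1},X)$ and $\mathcal{M}_i(a_{1:i-1},X')$ is at most $\varepsilon_i(\alpha)$. Taking $\mathcal{E}=\bigcap_{i\le T}E_i$ and $\mathcal{E}'=\bigcap_{i\le T}E_i'$, the chain rule for probabilities gives $\Pr[\mathcal{E}] \ge \prod_{i\le T}(1-\delta_i) \ge 1-\sum_{i\le T}\delta_i$, and the filter's halting rule $\sum_{i\le T}\delta_i \le \delta_{\max}$ yields $\Pr[\neg\mathcal{E}],\Pr[\neg\mathcal{E}'] \le \delta_{\max}$.

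The heart of the argument is the $\varepsilon$ side, run on the conditional distributions. With the likelihood ratio $\ell_t = p(a_{1:t})/q(a_{1:t})$, I would define $M_t = \ell_t^{\,\alpha}\exp\big(-(\alpha-1)\sum_{i\le t}\varepsilon_i(\alpha)\big)$ and, using the chain-rule factorization of $\ell_t$ together with the per-step conditional moment bound $\mathbb{E}\big[(\ell_t/\ell_{t-1})^\alpha\mid\mathcal{F}_{t-1}\big] \le \exp\big((\alpha-1)\varepsilon_t(\alpha)\big)$ (valid on the conditioned mechanisms, which are exactly $(\alpha,\varepsilon_t)$-RDP), verify that $\{M_t\}$ is a supermartingale under $q$ with $M_0=1$; here the adaptivity is harmless because $\varepsilon_t(\alpha)$ and the halting decision are $\mathcal{F}_{t-1}$-measurable. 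Since $T\le K$ is a bounded stopping time for the transcript filtration, optional stopping gives $\mathbb{E}_q[M_T]\le 1$, and the filter's guarantee $\sum_{i\le T}\varepsilon_i(\alpha)\le\varepsilon_{\max}(\alpha)$ rearranges this to $\mathbb{E}_q[\ell_T^{\,\alpha}]\le\exp\big((\alpha-1)\varepsilon_{\max}(\alpha)\big)$, i.e.\ the conditional divergence is at most $\varepsilon_{\max}(\alpha)$. Combined with the probability bound from the previous step, this is the claimed guarantee.

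The step I expect to be the main obstacle is reconciling the conditioning on the global events $\mathcal{E},\mathcal{E}'$ with the supermartingale structure: conditioning the whole transcript on $\bigcap_i E_i$ couples step $t$ to the \emph{future} events $E_{t+1},\ldots$, which breaks the adapted increments that optional stopping needs, and the conditional law generally fails to factorize as the product of the per-step conditioned laws because $\Pr[E_i\mid a_{1:i-1}]$ is only bounded, not constant. Following the $\delta$-approximate zCDP filter of Whitehouse et al., I would resolve this by taking the $E_i,E_i'$ measurable with respect to the transcript through step $i$ and absorbing the residual mass $\delta_i$ of each complement additively into the $\delta$ account rather than into the divergence, so that the supermartingale is run against the per-step conditioned (pure-RDP) laws while the stopped process remains a valid supermartingale. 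Verifying this measurability-and-factorization interplay is the one genuinely delicate point; the chain-rule and optional-stopping computations that surround it are routine.
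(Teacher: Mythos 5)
Your proposal is correct and follows essentially the same route as the paper's proof: both factor the likelihood ratio of the transcript via the chain rule, peel off the mechanisms one at a time using the per-step conditional RDP moment bound $\mathbb{E}\big[\mathrm{Loss}_t \mid \mathcal{F}_{t-1}\big] \le e^{(\alpha-1)\varepsilon_t}$ (your supermartingale-plus-optional-stopping formulation is the paper's backward induction written in martingale language), and accumulate the per-step failure probabilities $\delta_i$ additively against $\delta_{\max}$. If anything, your explicit attention to the measurability of the events $E_i$ with respect to the transcript through step $i$ addresses the one point the paper's proof treats informally with its ``with probability at least $1-\delta$'' conditioning.
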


\begin{proof}
We use here the notation used in~\citep{feldman2021individual}.
For $n \in [K]$ and for two neighboring datasets $X$ and $X'$, denote
$$
y^{(n)}=(y_1, \ldots, y_n),
$$
$$
\mathcal{M}^{(n)}(X) = \big( \mathcal{M}_1(X,\veps_1), \mathcal{M}_2(\mathcal{M}_1(X),X,\veps_2), \ldots, \mathcal{M}_n(\mathcal{M}_1(X), \ldots,
\mathcal{M}_{n-1}(X),X,\veps_n) \big),
$$
$$
\mathrm{Loss}^{(n)}(y^{(n)}; X, X', \alpha)  = \left(  \frac{\mathbb{P}( \mathcal{M}^{(n)}(X) = y^{(n)} )  }{\mathbb{P}( \mathcal{M}^{(n)}(X') = y^{(n)})}  \right)^\alpha,
$$
and
$$
\mathrm{Loss}_n(y^{(n)}; X, X', \alpha)  =  \left(  \frac{\mathbb{P}( \mathcal{M}_n(y^{(n-1)},X,\veps_n) = y_n )  }{\mathbb{P}( \mathcal{M}_n(y^{(n-1)},X',\veps_n) = y_n )}  \right)^\alpha.
$$
Since $\veps_n$ depends only on $y^{(n-1)}$ (not directly on the dataset), by the Bayes rule we have that
$$
\mathrm{Loss}^{(n)}(y^{(n)}; X, X', \alpha) = \mathrm{Loss}^{(n-1)}(y^{(n-1)}; X, X', \alpha) \cdot \mathrm{Loss}_n(y^{(n)}; X, X', \alpha).
$$

We next analyze the $\delta$-approximate RDP of the fully adaptive composition, using similar techniques as used in the proof of~\citep[Thm.\;3.1,][]{feldman2021individual}.
In the RDP integrals below, $y^{(K)}$ is distributed according to $ \mathcal{M}^{(K)}(X')$, and by "with probability at least 1-$\delta$" we mean that the given RDP bound holds except with probability at most $\delta$ over the randomness of $ \mathcal{M}^{(K)}(X')$.

Straightforward calculation then shows that
\begin{equation*}
    \begin{aligned}
&\mathbb{E}_{y^{(K)}| \sum_{i=1}^{K} \varepsilon_i \leq \varepsilon_{\max} \, \textrm{and} \, \sum_{i=1}^{K} \delta_i \leq \delta_{\max}} \left(  \frac{\mathbb{P}( \mathcal{M}^{(K)}(X) = y^{(K)} )  }{\mathbb{P}( \mathcal{M}^{(K)}(X') = y^{(K)})}  \right)^\alpha \\
       =&\mathbb{E}_{y^{(K)}} \left[ \mathrm{Loss}^{(K)}(y^{(K)}; X, X', \alpha) \bigg| \sum_{i=1}^K \veps_i \leq \veps_{\max} \textrm{ with probability at least }  1 - \sum_{i=1}^{K} \delta_i  \geq 1 - \delta_{\max}\right] \\
      = &\mathbb{E}_{y^{(K-1)}} \, \mathbb{E}_{y_K} \bigg[ \mathrm{Loss}^{(K-1)}(y^{(K-1)}; X, X', \alpha) \cdot \mathrm{Loss}_K(y^{(K)}; X, X', \alpha) \bigg| \\
            & \quad \quad \quad \veps_K  \leq \veps_{\max} - \sum_{i=1}^{K-1} \veps_i  \textrm{ with probability at least }  1 -  \delta_K \geq 1 - \delta_{\max}+ \sum_{i=1}^{K-1} \delta_i \bigg] \\
    \end{aligned}
\end{equation*}
This implies that
\begin{equation*}
    \begin{aligned}
    &\mathbb{E}_{y^{(K)}| \sum_{i=1}^{K} \varepsilon_i \leq \varepsilon_{\max} \, \textrm{and} \, \sum_{i=1}^{K} \delta_i \leq \delta_{\max}} \left(  \frac{\mathbb{P}( \mathcal{M}^{(K)}(X) = y^{(K)} )  }{\mathbb{P}( \mathcal{M}^{(K)}(X') = y^{(K)})}  \right)^\alpha \\
    \leq &\mathbb{E}_{y^{(K-1)}}  \left[ \mathrm{Loss}^{(K-1)}(y^{(K-1)}; X, X', \alpha) \right]  e^{(\alpha-1)\big( \veps_{\max} - \sum_{i=1}^{K-1} \veps_i\big)}  \\
\end{aligned}
\end{equation*}
with probability at least $1 - \delta_{\max}+ \sum_{i=1}^{K-1} \delta_i$.
Continuing, and using the fact that
\begin{equation*}
    \begin{aligned}
&\mathbb{E}_{y^{(K-1)}}  \left[ \mathrm{Loss}^{(K-1)}(y^{(K-1)}; X, X', \alpha) \right] \\
&= \mathbb{E}_{y^{(K-2)}} \mathbb{E}_{y_{K-1}} \left[ \mathrm{Loss}^{(K-2)}(y^{(K-1)}; X, X', \alpha) \mathrm{Loss}_{K-1}(y^{(K-1)}; X, X', \alpha) \right],
\end{aligned}
\end{equation*}
we have that
\begin{equation*}
    \begin{aligned}
    &\mathbb{E}_{y^{(K)}| \sum_{i=1}^{K} \varepsilon_i \leq \varepsilon_{\max} \, \textrm{and} \, \sum_{i=1}^{K} \delta_i \leq \delta_{\max}} \left(  \frac{\mathbb{P}( \mathcal{M}^{(K)}(X) = y^{(K)} )  }{\mathbb{P}( \mathcal{M}^{(K)}(X') = y^{(K)})}  \right)^\alpha \\
       \leq &\mathbb{E}_{y^{(K-2)}}  \left[ \mathrm{Loss}^{(K-2)}(y^{(K-1)}; X, X', \alpha) \mathrm{Loss}_{K-1}(y^{(K-1)}; X, X', \alpha) \right]  e^{(\alpha-1)\big( \veps_{\max} - \sum_{i=1}^{K-2} \veps_i\big)}  \\
      & \quad \quad \quad \textrm{ with probability at least } \quad  1 - \delta_{\max}+ \sum_{i=1}^{K-2} \delta_i
    \end{aligned}
\end{equation*}
since
$$
\mathbb{E}_{y_{K-1}} \left[\mathrm{Loss}_{K-1}(y^{(K-1)}; X, X', \alpha) \right] \leq e^{(\alpha-1) \veps_{K-1} }
$$
with probability at least $1-\delta_{K-1}$.

Next, continuing integration mechanism by mechanism, we inductively see that with probability at least $1-\delta_{\max}$, we have that
$$
 \mathbb{E}_{y^{(K)}}  \left[ \left(  \frac{\mathbb{P}( \mathcal{M}^{(K)}(X) = y^{(K)} )  }{\mathbb{P}( \mathcal{M}^{(K)}(X') = y^{(K)})}  \right)^\alpha  \right]
\leq e^{(\alpha-1) \veps_{\max}}.
$$
The conditions $\sum_{i=1}^{K+1} \varepsilon_i \leq \varepsilon_{\max}$ and $\sum_{i=1}^{K+1} \delta_i \leq \delta_{\max}$ hold by construction of the filter.
\end{proof}

\section{Example LLM Prompt: Template Used for 4-Shot Text Classification}
\label{sec:classification_prompt}
\begin{verbatim}
Instruction: Classify each article into one of the following categories
separated by comma: class1, class2, .., class_k.
Article: {demo text 1}, Class: {class1} \n
Article: {demo text 2}, Class: {class1} \n
Article: {demo text 3}, Class: {class2} \n
Article: {query text 3}, Class:
\end{verbatim}

\section{Example LLM Prompt: Template Used for 4-Shot QA}
\label{sec:qa_prompt}

For question answering task,we used the following prompt template. The bracketed terms (e.g., \texttt{\{demo text1\}}) indicate placeholders for specific data.
\begin{verbatim}
Read the text: {demo text1}
Answer the question with at most 4 words: {demo question1}
Do not provide a Yes/No answer: {demo answer1}

Read the text: {demo text2}
Answer the question with at most 4 words: {demo question2}
Do not provide a Yes/No answer: {demo Answer2}

Read the text: {demo text3}
Answer the question with at most 4 words: {demo question3}
Do not provide a Yes/No answer: {demo Answer3}

Read the text: {demo text4}
Answer the question with at most 4 words: {demo question4}
Do not provide a Yes/No answer: {demo Answer4}

Read the text: {query text}
Answer the question with at most 4 words: {query question}
Do not provide a Yes/No answer:
\end{verbatim}

\clearpage

\section{Description of Evaluation Metrics}

\begin{table*}[ht!]
\centering
\renewcommand{\arraystretch}{1.3}
\begin{tabular}{|p{4cm}|p{10cm}|}
\hline
\textbf{Metric} & \textbf{Description} \\
\hline
ANLS (Average Normalized Levenshtein Similarity) & Based on the Levenshtein (edit) distance, which measures the minimum number of single-character edits needed to convert one string into another. More lenient than usual ROUGE metrics and allows partial credit for semantically correct approximate answers. This was a key metric in the NeurIPS 2023 competition that introduced the federated DocVQA dataset~\citep{tobaben2024neurips}. \\
\hline
Exact Match & The fraction of test queries with final LM responses exactly matching the ground truth answer. Considered important in QA tasks as most answers tend to be at most 3 words. \\
\hline
ROUGE-1 & Measures the overlap of unigrams (individual words) between the LM response and the ground truth answer. Counts the number of words in the prediction that also appear in the ground truth. \\
\hline
ROUGE-2 & Measures the overlap of bigrams between the LM response and the ground truth answer. Captures more contextual similarity than ROUGE-1. \\
\hline
ROUGE-L & Captures sentence-level structure similarity by finding the longest sequence of words appearing in both LM response and ground truth in the same order. \\
\hline
BLEU (Bilingual Evaluation Understudy) & Computes the proportion of n-grams (1 to 4) in the LM response that appear in the ground truth. The final score is the geometric mean of n-gram precision multiplied by a penalty term for overly concise answers. \\
\hline
\end{tabular}
\caption{Description of evaluation metrics used in QA tasks.}
\label{tab:qa_metrics}
\end{table*}

\section{Distributions of Number of Tokens for Q\&A tasks}

\begin{figure} [h!]
     \centering
        \includegraphics[width=.45\textwidth]{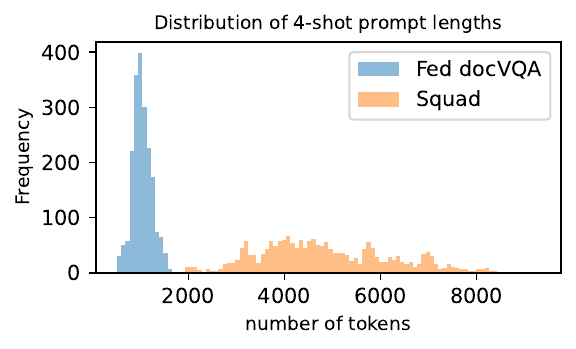}
        \caption{The distributions of number of tokens in the 4 shot prompts (created using demonstration and test examples) when \# shards= 20 for two datasets. The prompts for the fed DocVQA dataset are longer due to verbose nature of the images, hence many more ocr extracted tokens.}
        \label{fig:prompt_sizes}
\end{figure}

\clearpage

\section{Further Experimental Results for Q\&A tasks} \label{sec:further}

\begin{figure} [ht!]
     \centering
        \includegraphics[width=.49\textwidth]{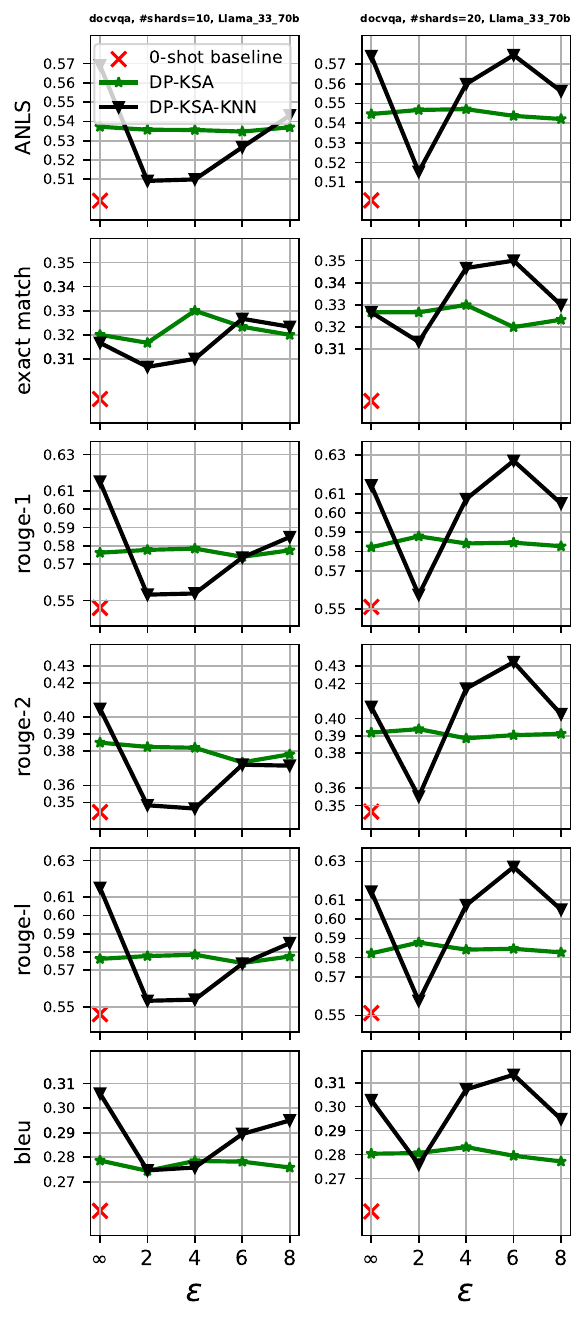}
              \includegraphics[width=.49\textwidth]{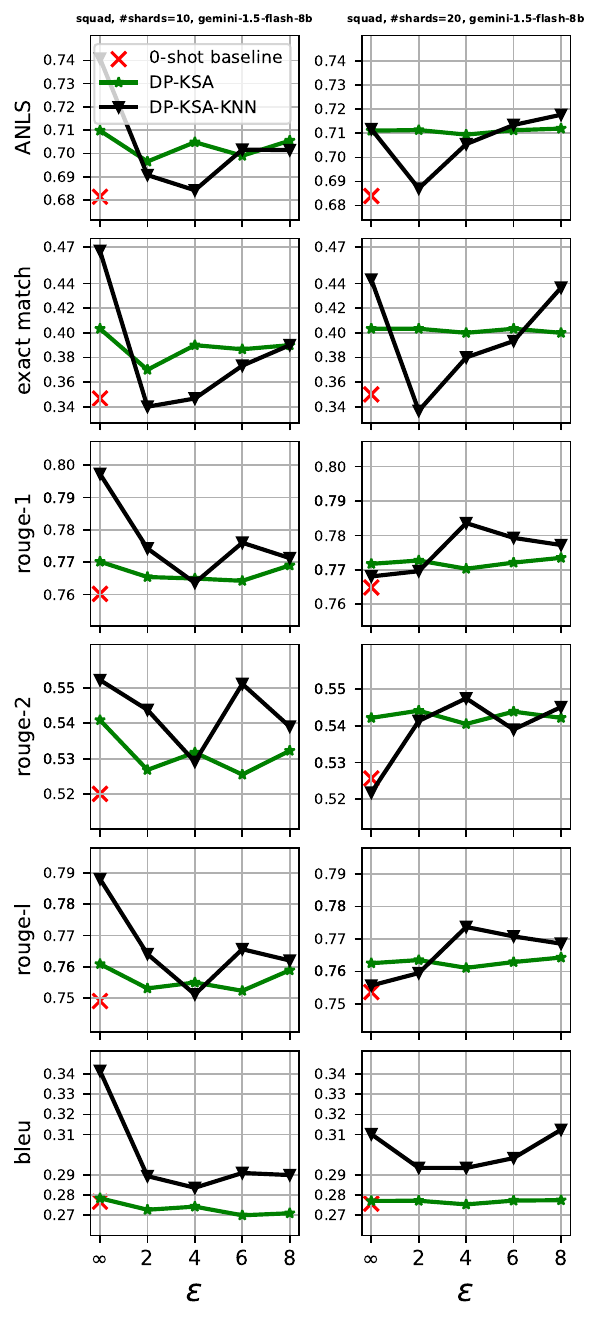}
        \caption{A comparison of DP-KSA and DP-KSA-kNN for average  Q\&A task metrics. Left: DocVQA dataset using Llama 3.3-70B-It.
        Right:  SQuAD dataset using Gemini-1.5-flash-8B.}
        \label{fig:docvqa_llama}
\end{figure}

\end{document}